\newcommand{\IT}{AHSC}
\newcommand{\gray}[1]{\cellcolor{gray!20}\textbf{#1}}
\DeclareMathOperator*{\argmax}{arg\,max}
\let\oldcite\cite
\renewcommand{\cite}[1]{(\oldcite{#1})}
\newtheorem{theorem}{Theorem}
\newtheorem{lemma}{Lemma}
\newtheorem{definition}{Definition}
\newcommand{\convexity}{$\mu-$strong convexity}
\DeclarePairedDelimiter\abs{\lvert}{\rvert}%
\DeclarePairedDelimiter\norm{\lVert}{\rVert}%
\let\oldabs\abs
\def\abs{\@ifstar{\oldabs}{\oldabs*}}
\let\oldnorm\norm
\def\norm{\@ifstar{\oldnorm}{\oldnorm*}}
\title{Strong convexity-guided hyper-parameter optimization for flatter losses}
 \author{
     \name Rahul Yedida 
     \email ryedida@ncsu.edu \\
     \addr Department of Computer Science \\ North Carolina State University, USA
     \AND
     \name Snehanshu Saha
     \email snehanshus@goa.bits-pilani.ac.in \\
     \addr Department of CSIS and APPCAIR \\ BITS Pilani K.K Birla Goa Campus\\ HappyMonk AI, Bangalore, India
 }
\begin{document}

\maketitle

\begin{abstract}
    We propose a novel white-box approach to hyper-parameter optimization. Motivated by recent work establishing a relationship between flat minima and generalization, we first establish a relationship between the strong convexity of the loss and its flatness. Based on this, we seek to find hyper-parameter configurations that improve flatness by minimizing the strong convexity of the loss. By using the structure of the underlying neural network, we derive closed-form equations to approximate the strong convexity parameter, and attempt to find hyper-parameters that minimize it in a randomized fashion. Through experiments on 14 classification datasets, we show that our method achieves strong performance at a fraction of the runtime.
\end{abstract}

\section{Introduction}

A typical machine learning pipeline involves using a combination of processes that have hyper-parameters that the analyst sets. There is significant interest in automatically computing a Pareto-optimal set of hyper-parameters tailored to the problem \cite{agrawal2019dodge, cowen2022hebo, li2017hyperband, bergstra2011algorithms, bergstra2012random, falkner2018bohb, eriksson2019scalable, ansel:pact:2014, snoek2012practical, hernandez2014predictive, swersky2014freeze, snoek2015scalable, bergstra2013making}. In parallel, there is a venerable line of work studying the loss landscapes of neural networks \cite{hochreiter1994simplifying, hochreiter1997flat, hinton1993keeping, chaudhari2019entropy, keskar2016large, dziugaite2017computing, mcallester1999pac, neyshabur2014search, neyshabur2017exploring, li2018visualizing, seong2018towards, dauphin2014identifying, choromanska2015loss, zhang2021understanding}. Notably, prior work has shown the effectiveness of improving the smoothness of loss surfaces via batch normalization \cite{santurkar2018does} and filter normalization \cite{li2018visualizing}.

Hyper-parameter optimization (HPO) is well-studied, with the most popular approaches being based on Bayesian optimization \cite{snoek2012practical, hernandez2014predictive, swersky2014freeze, bergstra2013making}, while other work suggests alternative approaches such as random search \cite{bergstra2012random} and tabu search \cite{agrawal2019dodge}. \citet{smith2018disciplined} discusses empirical methods to manually tune hyper-parameters based on the performance of the current system. However, although HPO has repeatedly been shown to improve learner performance \cite{Tantithamthavorn16,majumder2018500+}, much applied machine learning research either does not use HPO, or uses computationally expensive methods such as grid search. Some of this reluctance to use HPO stems from the general view that it is computationally expensive. For example, \citet{tran2020hyper} comment, ``\textit{Regardless of which hyper-parameter optimization method is used, this task is generally very expensive in terms of computational costs.}'' Moreover, there is a growing concern to reduce the carbon emissions from ML experiments \cite{lacoste2019quantifying}. Indeed, NeurIPS now suggests authors to report the carbon emissions from their experiments.

\begin{figure*}
    \centering
    \begin{subfigure}{0.3\textwidth}
        \includegraphics[width=\linewidth]{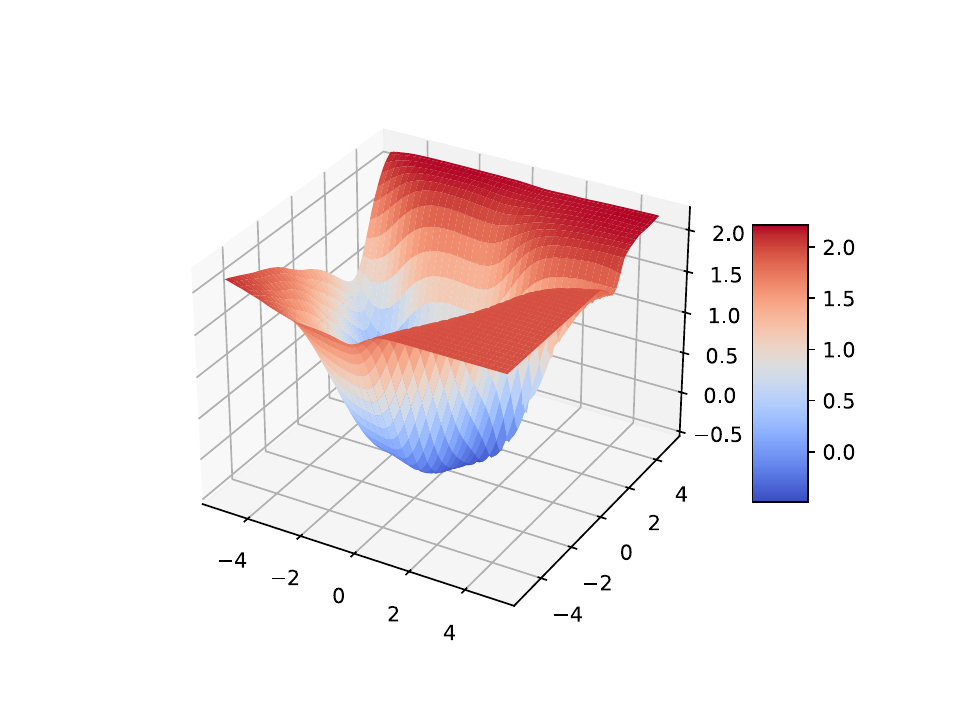}
    \end{subfigure}
    \begin{subfigure}{0.3\textwidth}
        \includegraphics[width=\linewidth]{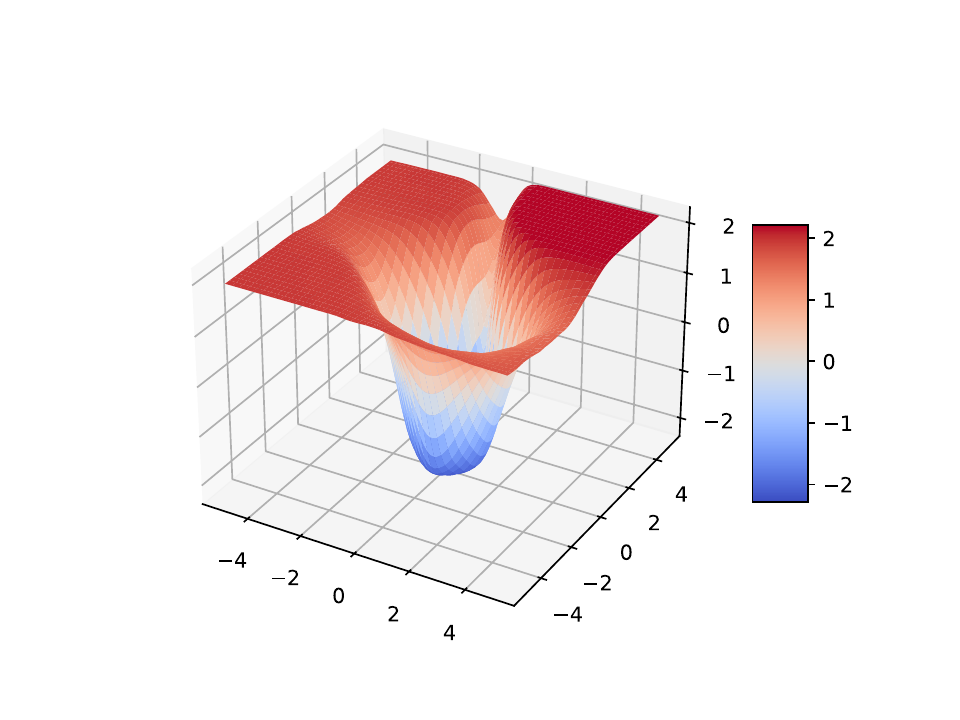}
    \end{subfigure}  
    \begin{subfigure}{0.35\textwidth}
        \includegraphics[width=\linewidth]{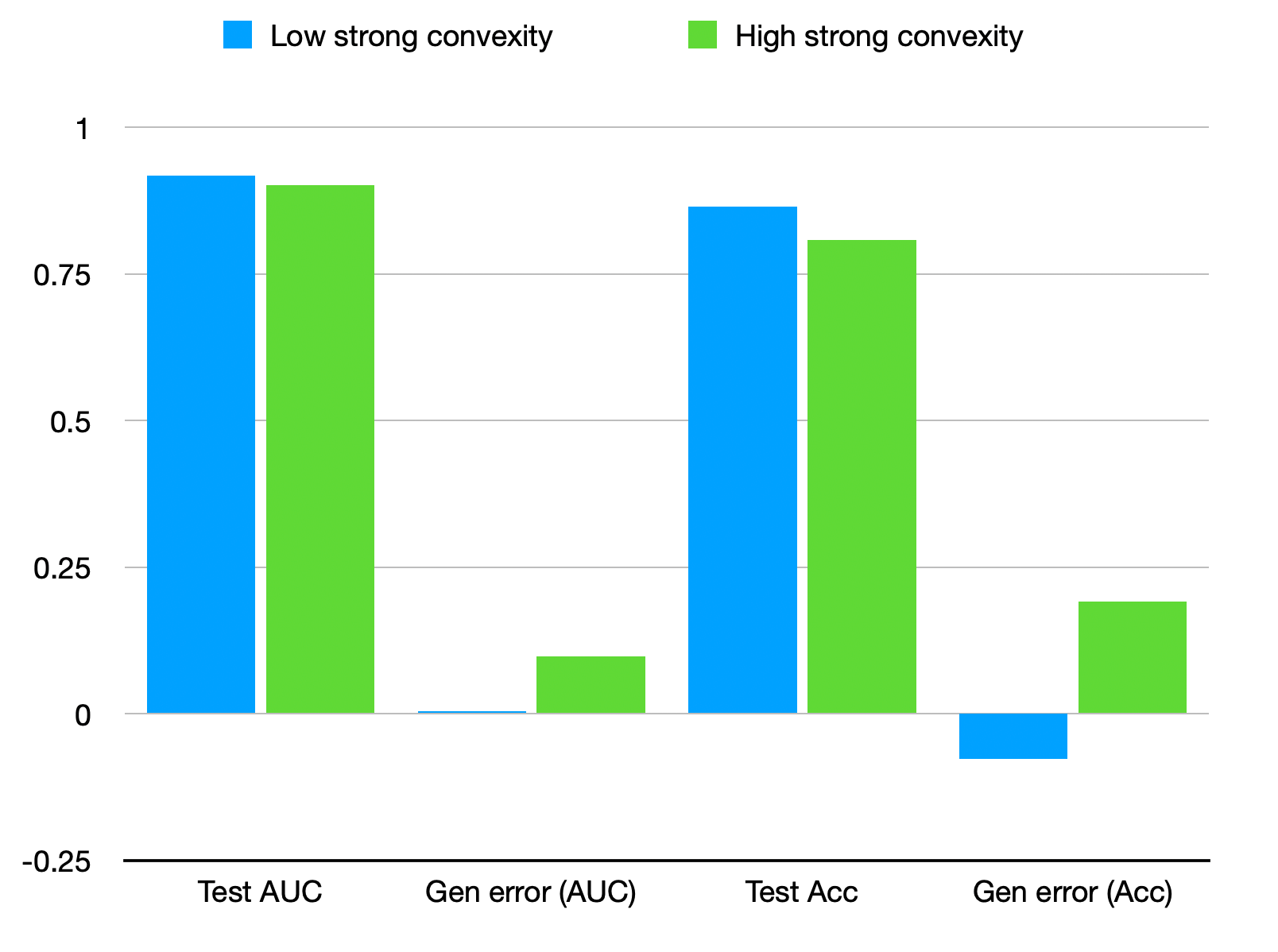}
    \end{subfigure}  
    \caption{Landscapes (plotted using \citet{li2018visualizing}) with their corresponding metrics, on the Australian (binary classification, imbalanced) dataset. \textbf{Left:} a landscape with lower strong convexity (0.112), and consequently, a wider minima. \textbf{Middle:} a landscape with high strong convexity (1.133), which leads to a sharp minima. \textbf{Right:} Test metrics and generalization error for the two hyper-parameter configurations. Although the sharper configuration converged faster to a training error of 0, it generalizes poorly and performs worse on the test set.}
    \label{fig:landscape}
\end{figure*}

Motivated by the need for computationally cheaper HPO methods, we pose the following question: \textit{can we aim to directly improve the desirable properties of loss landscapes by exploiting the structure of the learning algorithm?} Specifically, recent work has repeatedly endorsed the relationship between the \textit{flatness} of local minima and generalization ability of networks \cite{keskar2016large, jiang2019fantastic, neyshabur2017exploring, dziugaite2017computing, li2018visualizing, jastrzkebski2017three}. We use four major advances in the theoretical understanding of loss landscapes: (i) \citet{wu2023implicit} show that SGD can escape from low-loss, sharp minima (measured by the Frobenius norm of the Hessian) exponentially fast; (ii) \citet{dauphin2014identifying} used the line of work starting with \citet{bray2007statistics} to show that saddle points are exponentially more likely than local minima; (iii) gradient descent dynamics repel from saddle points (iv) the sharpness measure proposed by \citet{keskar2016large} have been repeatedly endorsed to correlate well with generalization \cite{jiang2019fantastic}.

In this work, we show that minimizing the supremum of the sharpness is equivalent in formulation to computing the infimum of the strong convexity of the loss in a mini-batch fashion. Next, we demonstrate a semi-empirical method of computing the strong convexity of a loss function parameterized by the hyper-parameters of the model. The result we obtain is general enough to cover a wide range of network topologies. We use this result to motivate a hyper-parameter optimization method that uses the strong convexity as a heuristic for search. Our method requires fewer full-length training runs of the learning algorithm, instead relying on one-epoch cycles to compute the strong convexity, and discarding hyper-parameter configurations that are not promising. 

Figure \ref{fig:landscape} shows the motivation for our approach. The left side shows a landscape with lower strong convexity (and a flatter minima), which in turn has much lower generalization errors for both accuracy and AUC; the middle shows a landscape with a higher strong convexity (and a sharper minima), which led to a much higher generalization error for both metrics. Note that for the latter case, the training stopped early since both training AUC and accuracy reached 1; however, the model generalized poorly, and did worse on the test set.


Our contributions are as follows:
\begin{itemize}
    \item We propose a novel white-box hyper-parameter optimization algorithm based on minimizing the strong convexity of the loss.
    \item We make the theoretical connection between the flatness of losses and strong convexity, motivating our approach.
    \item We show that our algorithm achieves strong performance in HPO across 14 datasets at a fraction of the computational cost.
\end{itemize}

To allow others to reproduce our work, our code is available online\footnote{
\url{https://github.com/yrahul3910/strong-convexity}
}.

\section{Related Work}

This section briefly discusses related work; for a more comprehensive discussion, please see Appendix \ref{sec:app:lit}.

\textbf{Hyper-parameter optimization.} In its general form, hyper-parameter optimization (HPO) solves the problem of finding a non-dominated hyper-parameter configuration under some budget. Early works \cite{bergstra2012random, bergstra2011algorithms} showed the strength of random search, but since then, Bayesian Optimization has become increasingly popular. For example, the Tree of Parzen Estimators (TPE) algorithm models $p(x|y)$ using two kernel density estimates depending on whether $y$ is below or above some quantile, and optimizes the Expected Improvement (EI). 

HyperBand (HB) \cite{li2017hyperband} uses a procedure called ``successive halving", which starts by randomly sampling a set of configurations and testing them under a limited budget, retaining only the best-performing ones and allocating those greater resources. At its core, its strategy is to aggressively prune poor-performing configurations so that more promising ones can be allocated more resources. Algorithms such as BOHB \cite{falkner2018bohb} and DEHB \cite{awad2021dehb} improve upon these in different ways: DEHB uses a distributed computing approach and combines differential evolution with HB, while BOHB combines a slightly modified version of the BO-based TPE with HB.

HEBO \cite{cowen2022hebo} use a combination of input and output transformations along with NSGA-II to optimize a multi-objective acquisition function. TuRBO \cite{dou2023turbo} assumes the hyper-parameter to performance mapping is Lipschitz, and uses an ensemble of learners to predict performance, using the prediction to update its Gaussian Process model instead if that prediction is poor.

\textbf{Flat minima and generalization.} The connection between ``flat minima'' and generalization has been repeatedly endorsed. The flatness of minima has been defined in various ways, such as the volume of hypercuboids such that the loss is within a tolerance \cite{hochreiter1997flat} and as robustness to adversarial perturbations in weight space \cite{keskar2016large}. The specific formulation of \citet{keskar2016large} is
$$
\zeta(\boldsymbol w; \epsilon) = \frac{\max\limits_{\boldsymbol v \in \mathcal{B}(\epsilon, \boldsymbol w)} f(\boldsymbol v) - f(\boldsymbol w)}{1 + f(\boldsymbol w)}
$$
This notion of sharpness was also endorsed by a large-scale study of complexity measures by \citet{jiang2019fantastic}. The relationship between flatness and generalization was later also endorsed by several works \cite{neyshabur2017exploring, li2018visualizing, wu2023implicit}. We defer the reader to Appendix \ref{sec:app:lit} for a more detailed review.

\section{Method}

\subsection{Notation and Assumptions}

For any learner, $X \in \mathcal{X}$ will represent the independent variables and $y$ will represent the labels; $W$ represents the weights of a neural network. We use $m$ to denote the number of training samples, $n$ to denote the number of features, $k$ to denote the number of classes, and $E$ to denote the loss function.

For a feedforward network, $L$ represents the number of layers, and at each layer, the following computation is performed:
       $ z^{[l]} = W^{[l]T}a^{[l-1]} + b^{[l]} \label{eq:zl}$
       and
        $a^{[l]}  = g^{[l]}(z^{[l]})$ where
        $a^{[0]} = X$.
Here, $b^{[l]}$ is the bias vector at layer $l$, $W^{[l]}$ is the weight matrix at layer $l$, and $g^{[l]}$ is the activation function used at layer $l$--we will assume this is the ReLU function $g^{[l]}(x) = \max(0, x)\  \forall l \in \{1,\ldots,L-1\}$. 

We use $\nabla_W \cdot$ to denote the first gradient of $\cdot$, with respect to $W$. $\nabla^2_W \cdot$ denotes the Hessian. Finally, $\mathcal{B}(\epsilon, w)$ denotes an $\epsilon-$ball centered at $w$. Whenever unspecified, the matrix norm is the Frobenius norm. We use $\mathbb{S}^n$ to denote the set of $n\times n$ real, symmetric matrices. We defer to Appendix \ref{sec:app:background} for background definitions in convex optimization.

We assume the space $\mathcal{X}$ is Polish\footnote{A Polish space is a complete, separable, metrizable space.}. We use the Frobenius norm of the Hessian to define the strong convexity of a loss landscape. In this context, completeness of the underlying metric space is necessary. Moreover, completeness is an important assumption in gradient descent, since it guarantees the existence of limits for Cauchy sequences.  Finally, separability helps avoid issues with measurability when we use a covering argument in the next section. 



\subsection{{\IT}: Accelerated HPO via Strong Convexity}
\label{sec:stcvx}

\begin{lemma}
    Let $A, B \in \mathbb{S}^n$ and suppose $A \succeq B$. Then $\norm{A}_2 \geq \norm{B}_2$
    \label{lemma:positivedef}
\end{lemma}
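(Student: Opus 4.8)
The plan is to reduce the claim to a comparison of largest eigenvalues via the Rayleigh-quotient (Courant--Fischer) characterization. First I would flag the one subtlety that the statement silently requires $B \succeq 0$ — which is exactly the regime in which the lemma is invoked here (e.g. $B = \mu I$ coming from $\mu$-strong convexity) — since without it the conclusion fails: taking $A = 0$ and $B = -I$ gives $A - B = I \succeq 0$, so $A \succeq B$, yet $\norm{A}_2 = 0 < 1 = \norm{B}_2$. Granting $B \succeq 0$, note that $A \succeq B \succeq 0$ forces $A \succeq 0$ as well, so $A$ and $B$ are symmetric positive semidefinite, and hence $\norm{A}_2 = \lambda_{\max}(A)$ and $\norm{B}_2 = \lambda_{\max}(B)$.

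Next I would unpack the hypothesis: $A \succeq B$ means $A - B \in \mathbb{S}^n$ is positive semidefinite, i.e. $x^\top A x \geq x^\top B x$ for every $x \in \mathbb{R}^n$. Restricting $x$ to the unit sphere $\{x : \norm{x}_2 = 1\}$ (compact, so the suprema below are attained) and taking the maximum of both sides yields $\max_{\norm{x}_2 = 1} x^\top A x \geq \max_{\norm{x}_2 = 1} x^\top B x$. By Courant--Fischer applied to the symmetric matrices $A$ and $B$, the left-hand side equals $\lambda_{\max}(A)$ and the right-hand side equals $\lambda_{\max}(B)$, so $\lambda_{\max}(A) \geq \lambda_{\max}(B)$. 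Combining with the identity from the previous step gives $\norm{A}_2 = \lambda_{\max}(A) \geq \lambda_{\max}(B) = \norm{B}_2$, as desired.

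The only genuine obstacle is the semidefiniteness hypothesis on $B$: the order $\succeq$ is \emph{not} monotone for the spectral norm in general — it controls $\lambda_{\max}$ from below but simultaneously relaxes the lower bound on $\lambda_{\min}$ — so the argument really does need the sign information that $B$ carries in our setting. Everything else (the variational formula for $\lambda_{\max}$, and $\norm{M}_2 = \lambda_{\max}(M)$ for PSD $M$) is standard. A near-identical alternative would be to diagonalize and invoke Weyl's monotonicity theorem, $A \succeq B \Rightarrow \lambda_k(A) \geq \lambda_k(B)$ for all $k$, then specialize to the top index; I would pick whichever is shorter to typeset, most likely the Rayleigh-quotient version since it avoids citing Weyl.
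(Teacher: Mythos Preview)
Your approach is essentially the paper's: both reduce to $\lambda_{\max}(A) \geq \lambda_{\max}(B)$ via the Courant--Fischer variational characterization. The paper actually states the full min-max theorem to get $\lambda_k(A) \geq \lambda_k(B)$ for every $k$ (your ``Weyl alternative'') and then specializes to $k=1$, so the two arguments differ only cosmetically. Your flagging of the hidden hypothesis $B \succeq 0$ is on point and an improvement over the paper: the paper writes $\norm{A}_2 = \sqrt{\lambda_{\max}(A^{\top}A)} = \abs{\lambda_{\max}(A)}$ for symmetric $A$, but the second equality is $\max_i \abs{\lambda_i(A)}$, not $\abs{\lambda_{\max}(A)}$, and your counterexample $A=0$, $B=-I$ indeed breaks the lemma as literally stated. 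Since the only invocation in the paper is with $B = \mu I \succeq 0$, your added hypothesis costs nothing downstream.
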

\begin{proof}
    We have $\norm{A}_2 = \sqrt{\lambda_{max}(A^T A)} = \abs{\lambda_{max}(A)}$ since $A$ is symmetric. 
    Since $A \succeq B$, we have by definition $x^T A x \geq x^T Bx$ for an arbitrary $x \in \mathbb{R}^n$. From the min-max theorem, we have $\forall k \in \{1, \ldots, n\}$,
    \begin{equation*}
    \lambda_k(A) = \min\limits_{\substack{\mathcal{M} \subset \mathbb{S}^n \\ \dim \mathcal{M} = k}} \max\limits_{x \in \mathcal{M} \backslash \{0\}} \frac{x^T A x}{x^T x} \geq \min\limits_{\substack{\mathcal{M} \subset \mathbb{S}^n \\ \dim \mathcal{M} = k}} \max\limits_{x \in \mathcal{M} \backslash \{0\}} \frac{x^T B x}{x^T x} = \lambda_k(B)
    \end{equation*}
    where $\lambda_k$ is the $k$th eigenvalue in the spectrum ordered in non-increasing order.
    Using the above with $k=1$ completes the proof.
\end{proof}

We start by writing the definition of strong convexity in terms of positive-semidefiniteness as
$$
\nabla^2 f(x) \succeq \mu I
$$
From Lemma \ref{lemma:positivedef}, we can rewrite this as
$$
\lVert \nabla^2 f(x) \rVert_2 \geq \mu \Rightarrow \norm{\nabla^2 f(x)}_F \geq \mu
$$
and so we redefine $\mu-$strong convexity as 
$$
\mu = \inf \lVert \nabla^2 f(x) \rVert_F
$$
A key motivation for minimizing the strong convexity is to relate it to the flatness of minima. Since $\nabla^2 f$ is symmetric (by Schwarz's theorem), we have
$$
\abs{\lambda_{max}(\nabla^2 f)} = \norm{\nabla^2 f}_2 \leq \norm{\nabla^2 f}_F
$$
so that $\abs{\lambda_{max}(\nabla^2 f)} = \inf \norm{\nabla^2 f}_F$ which directly relates the sharpness of the landscape with the definition of strong convexity above. We can also relate the above definition of strong convexity to the sharpness measure $\zeta$ established by \citet{keskar2016large}:
\begin{align*}
    \zeta(w) &= \frac{\max\limits_{w^\prime \in \mathcal{B}(\epsilon, w)} f(w^\prime) - f(w)}{1 + f(w)} \\
    &\overset{(i)}{\approxeq} \max\limits_{w^\prime \in \mathcal{B}(\epsilon, w)} f(w^\prime) - f(w) \\
    &\overset{(ii)}{\approxeq} \frac{\epsilon^2}{2} \norm{\nabla^2 f(w)}_2 \leq \frac{\epsilon^2}{2} \norm{\nabla^2 f(w)}_F
\end{align*}
where (i) is because the training error is typically small in practice \cite{neyshabur2017exploring} and (ii) follows from a second-order Taylor expansion of $f$ around $w$, as done by  \citet{dinh2017sharp}.  Therefore, we wish to minimize the sharpness, which is equivalent to the formulation above. Equivalently, we have (with $r$ denoting the rank of the Hessian)
\begin{equation*}
    \zeta(w) \approxeq \frac{\epsilon^2}{2} \norm{\nabla^2 f(w)}_2 \geq \frac{\epsilon^2}{2\sqrt{r}} \norm{\nabla^2 f(w)}_F \geq \frac{\epsilon^2}{2\sqrt{n}} \norm{\nabla^2 f(w)}_F
\end{equation*}
so that the strong convexity is a lower bound on the sharpness, and a higher strong convexity implies a higher sharpness, which is correlated with a higher generalization error\footnote{Assuming the network has sufficient capacity.}. Therefore, we have:
$$
\frac{\epsilon^2}{2\sqrt{n}}\norm{\nabla^2 f(w)}_F \leq \zeta(w) \leq \frac{\epsilon^2}{2}\norm{\nabla^2 f(w)}_F
$$
so that the strong convexity and a scaled version thereof provide bounds on the sharpness, and minimizing the strong convexity implies both the lower and upper bounds on the sharpness are lowered.

Motivated by the above relationship, we seek to find hyper-parameters that minimize this strong convexity. For a learner parameterized by $H \in \mathcal{H}$, denote its loss function as $E(x; H)$. Then, we would like to solve the problem
$$
	\inf\limits_{H \in \mathcal{H}}\sup\limits_{x \subset X} \lVert \nabla^2 E(x; H) \rVert
$$
This mini-batch version accounts for large datasets for which a mini-batch approach is necessary. Note that it is necessary to use an upper bound approximation at the mini-batch level: using an $\inf$ instead yields 0 a majority of the time, making the search ineffective. Indeed, we can relate this to the sharpness measure from \citet{keskar2016large}: this formulation corresponds to minimizing the upper bound on the sharpness over the entire $\epsilon-$ball.

We can bound the deviation from the true supremum using a covering argument \cite{duchi2023}. Let $f(x) = \sup \norm{\nabla^2 E(x)}$ be drawn from some set of functions $\mathcal{F}$, each of whose elements map from $\mathcal{X}$ to $\mathbb{R}$. Define a point-mass empirical distribution on $\{x_i\}_{i=1}^m$ as $P_m = \frac{1}{m} \sum\limits_{i=1}^m \delta_{x_i}$ where $\delta$ is the Dirac delta. For any function $f: \mathcal{X} \to \mathbb{R} \in \mathcal{F}$, let 
$$
P_m f \triangleq \mathbb{E}_{P_m}[f(X)] = \sum\limits_{i=1}^m f(x_i)
$$
be the empirical expectation over a mini-batch and let
$$
Pf \triangleq \mathbb{E}_P[f(X)] = \int f(x) dP(x)
$$
denote the general expectation under a measure $P$. Suppose the functions in $\mathcal{F}$ are bounded above by $\beta$ (trivially, they are bounded below by 0), and define the metric over $\mathcal{F}$ as $\norm{f - g}_\infty = \sup\limits_{x \in \mathcal{X}} \abs{f(x) - g(x)}$. Denote by $N(\delta, \Theta, \rho)$, the covering number for a $\delta-$cover of a set $\Theta$ with respect to a metric $\rho$. Then, we use the standard covering number guarantee (cf. \citet{duchi2023} Ch. 4) to get
\begin{equation*}
\scalebox{0.82}{$
        P\left( \sup\limits_{f \in \mathcal{F}} \abs{P_m f - Pf} \geq t \right) \leq \exp\left( -\frac{mt^2}{18\beta^2} + \log N(t/3, \mathcal{F}, \norm{\cdot}_\infty) \right)
$}
\end{equation*}

Algorithm \ref{alg:stcvx} shows our overall approach. We first sample $N_1$ random configurations (line 2). For each of these configurations, we first train the model for one epoch to bring the weights closer to their final weights (line 6). Training for a single epoch provides a balance between the cost associated with training fully (which would provide a more accurate estimate for the strong convexity), and not training at all (which provides a very poor estimate). We then compute $\norm{\nabla^2 E(x)}$ in a mini-batch fashion (lines 8-10). Since we wish to minimize the strong convexity, it is important that we look at the highest value across mini-batches, and aim to minimize that upper bound. Importantly, if the strong convexity is 0 (implying the function is \textit{not} strongly convex), we discard that configuration (lines 11-13). We pick the configurations corresponding to the $N_2$ lowest values of strong convexity as computed above, and train those models fully (lines 15-17). Finally, we return the best-performing configuration. 

In Appendix \ref{sec:proofs}, Theorem \ref{lemma:smoothgd}, we show that if the loss is smooth and $\mu-$strongly convex, then 
$$
f(x_t) - f(x^*) \leq ( 1 - \alpha \mu )^t \left( f(x_0) - f(x^*) \right)
$$
where $\alpha$ is the learning rate. Therefore,
\begin{equation*}
        \begin{aligned}
            f(x_{t+1}) - f(x_t) &\leq\left( (1 - \alpha \mu)^{t+1} - (1 - \alpha \mu)^t \right) \left( f(x_0) - f(x^*) \right) \\
                &\leq (1 - \alpha \mu)^t (-\alpha \mu) \left( f(x_0) - f(x^*) \right) \\
                &\leq \exp(-\alpha \mu t) (-\alpha \mu) \left( f(x_0) - f(x^*) \right)
        \end{aligned}
\end{equation*}
which implies exponentially decaying benefit as the number of epochs increases. On the other hand, Theorem \ref{lemma:smoothgd} also implies that for vanilla gradient descent, the number of steps required for convergence is inversely proportional to the strong convexity, so that minimizing the latter implies a greater number of steps is required to converge (which increases the runtime). To reduce the impact of this, we use the Adam \cite{kingma2014adam} optimizer. We leave it to future work to explore additional strategies, such as large adaptive learning rates, which can also lead to flatter losses \cite{jastrzkebski2017three}.

\begin{algorithm}[ht]
	\SetAlgoLined
	\SetKwInOut{KwInput}{Input}
    \KwInput{Number of configurations to sample $N_1$, number of configurations to run $N_2$. Defaults: $N_1 = 50, N_2 = 10$}

        $\mathcal{H}_0 \gets$ \textsc{Random}($\mathcal{H}, N_1$)\;
        $S \gets \phi$  \tcp*{Strong convexity values}\;
        $P \gets \phi$ \tcp*{Performance scores}\;

        \For{config $h$ in $\mathcal{H}_0$}{
            Train for one epoch using $h$\;
            $\mu_{max} = -\infty$\;
            \For{mini-batch $x \subset X$}{
                $\mu_{max} = \max(\mu_{max}, \norm{\nabla^2 f(x; h)})$\;
            }
            \If{$\mu_{max} > 0$}{
                S[h] $\gets \mu_{max}$\;
            }
        }

        \For{config $h$ in \textsc{Lowest}(S, $N_2$)}{
            P[h] $\gets$ \textsc{Run}(h)\;
        }

        \KwRet $\argmax P$
    \caption{{\IT}}
    \label{alg:stcvx}
\end{algorithm}


We now consider the general multi-class classification problem, where the cross-entropy loss is used, and the last layer of the neural network uses a softmax activation. Below, we show that the strong convexity of a feedforward network with a softmax activation in the last layer, trained on the cross-entropy loss, is given by
\[
    \inf \left\lVert \nabla^2_W E \right\rVert \propto \frac{1}{m} \inf \frac{\lVert a_j^{[L-1]} \rVert}{\lVert W^{[L]} \rVert}
\]

Importantly, the proof does not rely on the architecture of the network beyond the last two layers. That is, as long as the last two layers of the network are fully-connected, this theorem applies.

\begin{lemma}
    \label{lemma:partial}
    For a neural network with ReLU activations in the hidden layers and a softmax activation at the last layer, 
    \[
        \frac{\partial E}{\partial z^{[L]}_j} = \frac{1}{m} \left( \sum\limits_{i=1}^m [y^{(i)} = j] \right) \left( \sum\limits_{h=1}^k a^{[L]}_j - \delta_{hj} \right)
    \]
    under the cross-entropy loss.
\end{lemma}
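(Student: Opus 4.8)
The plan is to carry out the standard chain-rule computation for the gradient of the cross-entropy loss through a softmax output layer. Note first that the ReLU hidden layers are irrelevant to this particular lemma: $\partial E/\partial z^{[L]}_j$ depends only on the output-layer pre-activations and the loss, so the ReLU assumption in the hypothesis matters only for the later theorem on $\nabla^2_W E$, not here.

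First I would write the loss explicitly as $E = -\frac{1}{m}\sum_{i=1}^m \sum_{h=1}^k [y^{(i)}=h]\log a^{[L](i)}_h$, using the one-hot encoding $y^{(i)}_h = [y^{(i)}=h]$ of the labels and the softmax output $a^{[L]}_h = e^{z^{[L]}_h}/\sum_{l=1}^k e^{z^{[L]}_l}$. By the chain rule, $\frac{\partial E}{\partial z^{[L]}_j} = \sum_{h=1}^k \frac{\partial E}{\partial a^{[L]}_h}\cdot\frac{\partial a^{[L]}_h}{\partial z^{[L]}_j}$. The loss factor is immediate, $\frac{\partial E}{\partial a^{[L]}_h} = -\frac{1}{m}\sum_i [y^{(i)}=h]\big/ a^{[L](i)}_h$. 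The one genuine computation is the softmax Jacobian: splitting into the cases $h=j$ and $h\neq j$ and applying the quotient rule to $e^{z_j}/\sum_l e^{z_l}$ gives $\frac{\partial a^{[L]}_h}{\partial z^{[L]}_j} = a^{[L]}_h(\delta_{hj} - a^{[L]}_j)$, hence $\frac{\partial \log a^{[L]}_h}{\partial z^{[L]}_j} = \delta_{hj} - a^{[L]}_j$. Substituting, the $a^{[L]}_h$ factors cancel and I obtain $\frac{\partial E}{\partial z^{[L]}_j} = \frac{1}{m}\sum_{i=1}^m\sum_{h=1}^k [y^{(i)}=h]\,(a^{[L]}_j - \delta_{hj})$.

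Finally I would reorganize this double sum into the stated form: interchanging the order of summation collects the label indicator into a class-count factor multiplying $\sum_{h=1}^k (a^{[L]}_j - \delta_{hj})$, which matches the claimed expression (with the one-hot identity $\sum_h [y^{(i)}=h]=1$ making the $\delta_{hj}$ term collapse cleanly).

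The main obstacle here is not the calculus, which is routine, but reconciling the precise index grouping in the statement with what the chain rule naturally produces — in particular being explicit about which sum the label count is pulled out of, and about the mini-batch simplification (treating $a^{[L]}_j$ as common across the batch near convergence, as elsewhere in this section) under which the factored form displayed is exactly the one that arises. I would state that simplification up front so the equality in the lemma is an identity rather than an approximation within the chosen regime.
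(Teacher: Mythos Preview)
Your proposal is correct and follows essentially the same route as the paper: write the cross-entropy in Iverson form, apply the chain rule through the softmax, compute the loss derivative and the softmax Jacobian separately, and combine. Your version is in fact a bit more careful---you write the chain rule as the full sum $\sum_h (\partial E/\partial a^{[L]}_h)(\partial a^{[L]}_h/\partial z^{[L]}_j)$ whereas the paper's equation displays only the single $h=j$ term before the sum reappears in the final line, and you explicitly flag the batch-level simplification needed to pull the class-count factor outside---but the underlying argument is identical.
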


\begin{proof}
    We will use the chain rule, as follows:
    \begin{align}
    	\frac{\partial E}{\partial z^{[L]}_{j}} &= \frac{\partial E}{\partial a^{[L]}_j}\cdot \frac{\partial a^{[L]}_j}{\partial z^{[L]}_j} \label{eq:int:5}
    \end{align}
    Consider the Iverson notation version of the general cross-entropy loss:
    $$
    E(a^{[L]}) = -\frac{1}{m} \sum\limits_{i=1}^m \sum\limits_{h=1}^k [y^{(i)} = h] \log a_h^{[L]}
    $$
    Then the first part of \eqref{eq:int:5} is trivial to compute:
    \begin{equation}
        \frac{\partial E}{\partial a_j^{[L]}} = -\frac{1}{m} \sum\limits_{i=1}^m \frac{[y^{(i)}=j]}{a_j^{[L]}} \label{eq:mul:2}
    \end{equation}
    The second part is the derivative of the softmax and is equal to
    \begin{align} 
        \frac{\partial a^{[L]}_j}{\partial z^{[L]}_p} &= a^{[L]}_j([p=j]-a^{[L]}_p) \label{eq:mul:3}
    \end{align}

    Combining \eqref{eq:mul:2} and \eqref{eq:mul:3} in \eqref{eq:int:5} gives
    \begin{equation}
        \frac{\partial E}{\partial z^{[L]}_j} = \frac{1}{m} \left( \sum\limits_{i=1}^m [y^{(i)} = j] \right) \left( \sum\limits_{h=1}^k a^{[L]}_j - \delta_{hj} \right)
        \label{eq:mul:4}
    \end{equation}
\end{proof}

\begin{theorem}[{\convexity} for neural classifiers]
    For a neural network with ReLU activations in the hidden layers, a fully connected penultimate layer, and a softmax activation at the last layer, the {\convexity} of the cross-entropy loss is given by
    \begin{equation}
    \label{eq:beta-ff}
        \inf \left\lVert \nabla^2_W E \right\rVert \propto \frac{1}{m} \inf \frac{\lVert a_j^{[L-1]} \rVert}{\lVert W^{[L]} \rVert}
    \end{equation}
    \label{th:stcvx:ff}
\end{theorem}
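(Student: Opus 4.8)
The plan is to work entirely with the block of the Hessian corresponding to the last-layer weights $W^{[L]}$, treating everything below layer $L$ as a black box that supplies the fixed vector $a^{[L-1]}$; this is precisely why the conclusion will only involve the last two layers. First I would differentiate through the pre-activation $z^{[L]} = W^{[L]T} a^{[L-1]} + b^{[L]}$. Since this map is affine in $W^{[L]}$ and $a^{[L-1]}$ does not depend on $W^{[L]}$, the chain rule gives $\partial E/\partial W^{[L]}_{pj} = a^{[L-1]}_p\,(\partial E/\partial z^{[L]}_j)$, and differentiating once more (the affine map contributes no second-order term) gives
\[
  \frac{\partial^2 E}{\partial W^{[L]}_{pj}\,\partial W^{[L]}_{qr}} = a^{[L-1]}_p\,a^{[L-1]}_q\,\frac{\partial^2 E}{\partial z^{[L]}_j\,\partial z^{[L]}_r}.
\]
Hence $\nabla^2_{W^{[L]}}E$ is, up to a reindexing of rows and columns, the Kronecker product of $a^{[L-1]}(a^{[L-1]})^{T}$ with the $k\times k$ logit Hessian $\nabla^2_{z^{[L]}}E$; since the Frobenius norm is multiplicative over Kronecker products, $\norm{\nabla^2_{W^{[L]}}E}_F = \norm{a^{[L-1]}}^{2}\,\norm{\nabla^2_{z^{[L]}}E}_F$.

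Next I would compute $\nabla^2_{z^{[L]}}E$ by differentiating the first-order expression of Lemma~\ref{lemma:partial} once more in $z^{[L]}$, using the softmax Jacobian $\partial a^{[L]}_j/\partial z^{[L]}_r = a^{[L]}_j(\delta_{jr}-a^{[L]}_r)$. This yields entries equal to $\tfrac1m$ times the class-count factors times $a^{[L]}_j(\delta_{jr}-a^{[L]}_r)$, so that, after collecting terms, $\norm{\nabla^2_{z^{[L]}}E}_F$ is $\tfrac1m$ times a constant times $\phi(a^{[L]})$, where $\phi(a)=\bigl(\norm{a}^{2}+\norm{a}^{4}-2\sum_j a_j^{3}\bigr)^{1/2}$ depends on the softmax output alone. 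Combining with the previous paragraph, $\norm{\nabla^2_{W^{[L]}}E}_F \propto \tfrac1m\,\norm{a^{[L-1]}}^{2}\,\phi(a^{[L]})$.

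The last step, which I expect to be the crux, is to trade the softmax factor $\phi(a^{[L]})$ for the weight norm so that $\norm{a^{[L-1]}}^{2}\,\phi(a^{[L]})$ collapses to $\norm{a^{[L-1]}}/\norm{W^{[L]}}$ (the component index appearing in the statement being immaterial to the scaling). Here I would once more use $z^{[L]} = W^{[L]T}a^{[L-1]}+b^{[L]}$: the softmax output, and therefore $\phi(a^{[L]})$, is controlled by the scale of the logits, which is $\norm{z^{[L]}}\asymp\norm{W^{[L]}}\,\norm{a^{[L-1]}}$ up to the lower-order bias term. Treating the softmax as a smoothed normalization of the logits — so that a first-order/Lipschitz estimate of its local sensitivity contributes a factor of order $1/\norm{z^{[L]}}$ — removes one power of $\norm{a^{[L-1]}}$ and inserts the $1/\norm{W^{[L]}}$; taking the infimum over hyper-parameter configurations on both sides then yields \eqref{eq:beta-ff}. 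Unlike the exact chain-rule manipulations of the first two steps, this estimate is semi-empirical: $\phi(a^{[L]})$ has no exact closed form in $\norm{W^{[L]}}$, and the bound is only expected to be tight in the confident-prediction regime reached after the one-epoch warm start, which is consistent with the algorithm discarding any configuration whose estimated strong convexity is exactly $0$.
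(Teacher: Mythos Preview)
Your route diverges from the paper's at the second differentiation. You (correctly) hold $a^{[L-1]}$ fixed with respect to $W^{[L]}$, differentiate through the softmax, and obtain the clean Kronecker factorization $\norm{\nabla^2_{W^{[L]}}E}_F=\norm{a^{[L-1]}}^{2}\norm{\nabla^2_{z^{[L]}}E}_F$; the weight norm then has to be coaxed in afterwards by the heuristic scaling $\phi(a^{[L]})\sim 1/\norm{z^{[L]}}\sim 1/(\norm{W^{[L]}}\,\norm{a^{[L-1]}})$. The paper does essentially the opposite. When it differentiates $\nabla_{W^{[L]}_j}E=\tfrac1m(\cdots)\,a_j^{[L-1]}$ a second time it leaves the softmax-dependent bracket alone and instead differentiates the factor $a_j^{[L-1]}$, writing $\nabla_{W^{[L]}_j}a_j^{[L-1]}=(\partial a_j^{[L-1]}/\partial E)(\partial E/\partial W^{[L]}_j)$ and replacing the first factor by the reciprocal $1\big/(\partial E/\partial a_j^{[L-1]})$. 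Because $\partial E/\partial a_j^{[L-1]}=(\partial E/\partial z_j^{[L]})\,W_j^{[L]}$, this inversion puts $W^{[L]}$ directly in the denominator; the squared bracket in the numerator partially cancels against the one in the reciprocal, and the remaining bracket is then removed by passing to the ``limiting case'' $a^{[L]}\to 0$ (so that $\sum_h a_j^{[L]}-\delta_{hj}\to -1$), with the class-count sum absorbed into the proportionality constant.

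Thus your Kronecker/softmax-Hessian computation is the more standard piece of calculus, but it is \emph{not} how the paper produces the $1/\norm{W^{[L]}}$; the paper's mechanism is the inverted chain rule through $E$ together with $\partial z^{[L]}/\partial a^{[L-1]}=W^{[L]}$, which you never invoke. Your substitute --- the Lipschitz-type estimate $\phi(a^{[L]})\sim 1/\norm{z^{[L]}}$ --- is also the weakest link of your own argument: the logit Hessian of softmax cross-entropy does not generically scale like an inverse power of $\norm{z^{[L]}}$ (it is bounded near the uniform output and decays exponentially, not polynomially, once one logit dominates), so that step would need a sharper justification than a first-order sensitivity bound if you want it to stand independently of the paper's derivation.
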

\begin{proof}
For a neural network with ReLU activations, Lemma \ref{lemma:partial} gives us (using the chain rule one step further):
\[
    \nabla_{W^{[L]}_j} E = \frac{1}{m} \left( \sum\limits_{i=1}^m [y^{(i)} = j] \right) \left( \sum\limits_{h=1}^k a^{[L]}_j - \delta_{hj} \right) a_j^{[L-1]}
\]
Therefore,
\begin{equation*}
\begin{aligned}
    \nabla^2_{W^{[L]}_j} E &= \frac{1}{m} \left( \sum\limits_{i=1}^m [y^{(i)} = j] \right) \left( \sum\limits_{h=1}^k a^{[L]}_j - \delta_{hj} \right) \nabla_{W^{[L]}_j} a_j^{[L-1]} \\
     &= \frac{1}{m} \left( \sum\limits_{i=1}^m [y^{(i)} = j] \right) \left( \sum\limits_{h=1}^k a^{[L]}_j - \delta_{hj} \right) \dfrac{\partial a_j^{[L-1]}}{\partial E} \dfrac{\partial E}{\partial W^{[L]}_{j}} & \\
     &= \frac{1}{m^2} \left( \sum\limits_{i=1}^m [y^{(i)} = j] \right)^2 \left( \sum\limits_{h=1}^k a_j^{[L]} - [h = j] \right)^2 \frac{\partial a_j^{[L-1]}}{\partial E} a_j^{[L-1]} \\ 
     &= \frac{1}{m^2} \left( \sum\limits_{i=1}^m [y^{(i)} = j] \right)^2 \left( \sum\limits_{h=1}^k a_j^{[L]} - [h = j] \right)^2 \frac{1}{\dfrac{\partial E}{\partial a_j^{[L-1]}}} a_j^{[L-1]}  & \\
\end{aligned}
\end{equation*}

Using $z^{[l]} = W^{[l]T}a^{[l-1]} + b^{[l]}$ and Lemma \ref{lemma:partial} gives us:
\[
\begin{aligned}
    \dfrac{\partial E}{\partial a_j^{[L-1]}} &= \dfrac{\partial E}{\partial z_j^{[L]}} \dfrac{\partial z_j^{[L]}}{\partial a_j^{[L-1]}} \\
    &= \frac{1}{m} \left( \sum\limits_{i=1}^m [y^{(i)} = j] \right) \left( \sum\limits_{h=1}^k a_j^{[L]} - [h = j] \right) W_j^{[L]}
\end{aligned}
\]
The limiting case of this is when all softmax values are 0, so that the second summation term is -1. The first summation is a positive constant, which we drop as a proportionality constant. Now, \eqref{eq:beta-ff} immediately follows.
\end{proof}

\section{Experiments}
\label{sec:experiments}

\begin{table*}[ht!]
{\scriptsize
\centering
\caption{Experimental results on various classification datasets. {\IT} is our method. Values shown are medians over 20 repeats. Statistically best results are highlighted in \textbf{bold} (see Section \ref{sec:experiments} for details).}
\label{tab:results}
    \begin{subtable}[t]{.45\linewidth}
        \begin{tabular}[t]{llr}
            \toprule
            \multicolumn{3}{c}{\textbf{Image}} \\
            \midrule
            \textbf{Dataset}                 & \textbf{HPO method} & \multicolumn{1}{l}{\textbf{Accuracy}} \\
            \midrule
            \multirow{6}{*}{MNIST} & {\IT}                & \gray{98.65}                    \\
                                    & Hyperopt            & 97.22                     \\
                                    & Random       & \gray{98.95}                              \\
                                    & TuRBO               & \gray{98.99}                              \\
                                    & HEBO                & \gray{98.94}                              \\
                                    & BOHB                & 98.85 \\
            \midrule
            \multirow{6}{*}{SVHN} & {\IT}                &  \gray{86.63}                   \\
                                    & Hyperopt            & 67.20                     \\
                                    & Random       & \gray{91.86}                              \\
                                    & TuRBO               &  80.41                             \\
                                    & HEBO                & 92.89                              \\
                                    & BOHB                & 79.67 \\
            \midrule
            \multicolumn{3}{c}{\textbf{Bayesmark}} \\
            \midrule
            \textbf{Dataset}                 & \textbf{HPO method} & \multicolumn{1}{l}{\textbf{Score}} \\
            \midrule
            \multirow{6}{*}{breast} & {\IT}          & \gray{93.98}                     \\
                                    & Hyperopt            & \gray{92.68}                     \\
                                    & Random       & 90.45                              \\
                                    & TuRBO               & 88.97                              \\
                                    & HEBO                & 90.84                              \\
                                    & BOHB                & 92.37 \\
            \midrule
            \multirow{6}{*}{digits} & {\IT}          & 84.74                                  \\
                                    & Hyperopt            & \gray{96.85}                     \\
                                    & Random       & 87.39                              \\
                                    & TuRBO               & 89.51                              \\
                                    & HEBO                & \gray{95.24}                     \\
                                    & BOHB                & 91.50 \\
            \midrule
            \multirow{6}{*}{iris}   & {\IT}          & \gray{91.00}                    \\
                                    & Hyperopt            & 79.83                              \\
                                    & Random       & 82.98                              \\
                                    & TuRBO               & 83.52                              \\
                                    & HEBO                & 78.27                              \\ 
                                    & BOHB                & \gray{92.30} \\
            \midrule
            \multirow{6}{*}{wine}   & {\IT}          & \gray{92.35}                     \\
                                    & Hyperopt            & 83.98                              \\
                                    & Random       & 76.59                              \\
                                    & TuRBO               & 81.15                              \\
                                    & HEBO                & 74.93                              \\   
                                    & BOHB                & 81.68
                                            \\
            \bottomrule
        \end{tabular}
    \end{subtable}
    \hspace{.03\textwidth}
    \begin{subtable}[t]{.45\textwidth}
        \begin{tabular}[t]{llr}
            \toprule
            \multicolumn{3}{c}{\textbf{OpenML}} \\
            \midrule
            \textbf{Dataset}                 & \textbf{HPO method} & \multicolumn{1}{l}{\textbf{AUC}} \\
            \midrule
            \multirow{6}{*}{\href{https://openml.org/t/53}{vehicle}} & {\IT}                & \gray{0.885}               \\
                                    & Hyperopt            &  \gray{0.883}                   \\
                                    & Random       & 0.873                              \\
                                    & TuRBO               & \gray{0.882}                                \\
                                    & HEBO                & \gray{0.884}                              \\
                                    & BOHB                & \gray{0.883} \\
            \midrule
            \multirow{6}{*}{\href{https://openml.org/t/10101}{blood-transf...}} & {\IT}                &  0.721               \\
                                    & Hyperopt            &  \gray{0.728}                   \\
                                    & Random       &  0.708                             \\
                                    & TuRBO               &  0.720                            \\
                                    & HEBO                & 0.718                              \\
                                    & BOHB                & \gray{0.725} \\
            \midrule
            \multirow{6}{*}{\href{https://openml.org/t/146818}{Australian}} & {\IT}                &  \gray{0.934}                  \\
                                    & Hyperopt            &  \gray{0.932}                   \\
                                    & Random       &  0.928                             \\
                                    & TuRBO               &  \gray{0.932}                            \\
                                    & HEBO                & \gray{0.935}                              \\
                                    & BOHB                & \gray{0.928} \\
            \midrule
            \multirow{6}{*}{\href{https://openml.org/t/146821}{car}} & {\IT}                &  1.0                  \\
                                    & Hyperopt            &  \gray{1.0}                   \\
                                    & Random       & 1.0                              \\
                                    & TuRBO               & 1.0                              \\
                                    & HEBO                & 1.0                              \\
                                    & BOHB                & \gray{1.0} \\
            \midrule
            \multirow{6}{*}{\href{https://openml.org/t/9952}{phoneme}} & {\IT}                & 0.560                  \\
                                    & Hyperopt            & \gray{0.564}                    \\
                                    & Random       & 0.561                              \\
                                    & TuRBO               & \gray{0.564}                              \\
                                    & HEBO                & 0.563                              \\
                                    & BOHB                & \gray{0.563} \\
            \midrule
            \multirow{6}{*}{\href{https://openml.org/t/146822}{segment}} & {\IT}                &  \gray{0.961}                 \\
                                    & Hyperopt            & \gray{0.960}                    \\
                                    & Random       &  0.955                             \\
                                    & TuRBO               & \gray{0.960}                              \\
                                    & HEBO                & \gray{0.961}                             \\
                                    & BOHB                & \gray{0.960} \\
            \midrule
            \multirow{6}{*}{\href{https://openml.org/t/31}{credit-g}} & {\IT}                & 0.778                  \\
                                    & Hyperopt            &  \gray{0.782}                   \\
                                    & Random       &  0.766                             \\
                                    & TuRBO               &  0.763                             \\
                                    & HEBO                & 0.763                              \\
                                    & BOHB                & \gray{0.752} \\
            \midrule
            \multirow{6}{*}{\href{https://openml.org/t/3917}{kcl}} & {\IT}                & \gray{0.816}                  \\
                                    & Hyperopt            &  \gray{0.817}                   \\
                                    & Random       & 0.769                              \\
                                    & TuRBO               & 0.775                              \\
                                    & HEBO                & \gray{0.816}                              \\
                                    & BOHB                & \gray{0.785} \\
            \bottomrule
            \end{tabular}
    \end{subtable}
    }
\end{table*}

We compare our approach based on strong convexity (which we call {\IT}) with other popular hyper-parameter optimization algorithms. We randomly sample 50 configurations, compute their strong convexity, and run the top 10, reporting the best-performing one. We repeat all experiments 20 times, and compare results using pairwise Mann-Whitney tests \cite{mann1947test} with a Benjamini-Hochberg correction procedure for p-values (as endorsed by \citet{farcomeni2008review}), employing a 5\% significance level.

For tabular datasets, we experiment on the Bayesmark datasets used in the NeurIPS 2020 Black-Box Optimization Challenge and the 8 datasets used in the MLP benchmarks in HPOBench \cite{eggensperger2021hpobench}. For convolutional networks, we run experiments on MNIST \cite{lecun1998mnist} and SVHN \cite{netzer2011reading}.

For Bayesmark, we use the default set of hyper-parameters for MLPs, which has a size of 134M\footnote{\url{https://github.com/uber/bayesmark/blob/master/bayesmark/sklearn_funcs.py}}. For MNIST, we used Conv - MaxPooling blocks, followed by two fully-connected layers. For SVHN, we used Conv - BatchNorm - Conv - MaxPooling - Dropout layers, followed by a fully-connected layer, a dropout layer, and a final fully-connected layer. The range of hyper-parameters for all these models is shown in Table \ref{tab:hpo_space}.

We use the metrics employed by prior work to ensure a fair and consistent evaluation. For Bayesmark datasets, we report the mean normalized score, which first calculates the performance gap between observations and the global optimum and divides it by the gap between random search and the optimum. For MNIST and SVHN, we use the accuracy score. For the OpenML datasets, we use the area under the ROC curve, since we found that many of them had notable class imbalances.

\begin{table}[t]
    \centering
    \caption{Hyper-parameters used in this study. Ranges are inclusive. Unless specified, the ranges are linear. For Bayesmark, we use the default hyper-parameter set, whose size is 134M.}
    \label{tab:hpo_space}
    \begin{tabular}{lr}
        \toprule
        \multicolumn{2}{c}{OpenML} \\
        \midrule
        \textbf{Hyper-parameter} & \textbf{Range} \\
        \midrule
        Network depth & (1, 4) \\
        Network width & (16, 1024), $\log_2$ \\
        Batch size & (4, 256), $\log_2$ \\
        Initial learning rate & $(10^{-5}, 1.0)$, $\log_{10}$ \\
        \midrule
        \multicolumn{2}{c}{MNIST} \\
        \midrule
        \textbf{Hyper-parameter} & \textbf{Range} \\
        \midrule
        Number of filters & (2, 6) \\
        Kernel size & (2, 6) \\
        Padding & Valid, same \\
        Number of conv blocks & (1, 3) \\
        \midrule
        \multicolumn{2}{c}{SVHN} \\
        \midrule
        \textbf{Hyper-parameter} & \textbf{Range} \\
        \midrule
        Number of filters & (2, 6) \\
        Kernel size & (2, 6) \\
        Padding & Valid, same \\
        Number of conv blocks & (1, 3) \\
        Dropout rate & (0.2, 0.5) \\
        Final dropout rate & (0.2, 0.5) \\
        Number of units & (32, 512), $\log_2$ \\
        \bottomrule
    \end{tabular}
\end{table}

Our results, shown in Table \ref{tab:results}, demonstrate that strong convexity is both capable of achieving strong performance on most datasets. Importantly, these are also computationally cheap (see Section \ref{sec:runtime}).

\subsection{Runtime}
\label{sec:runtime}

In practice, computing the strong convexity is cheap. In detail, we train the network with the given hyper-parameters for one epoch and then use the equations derived to compute the strong convexity in mini-batches. The one epoch of training moves the network weights closer to the final position, so that the measured loss criterion is more accurate than from a randomly initialized point. 

\begin{figure}
    \centering
    \includegraphics[width=.4\linewidth]{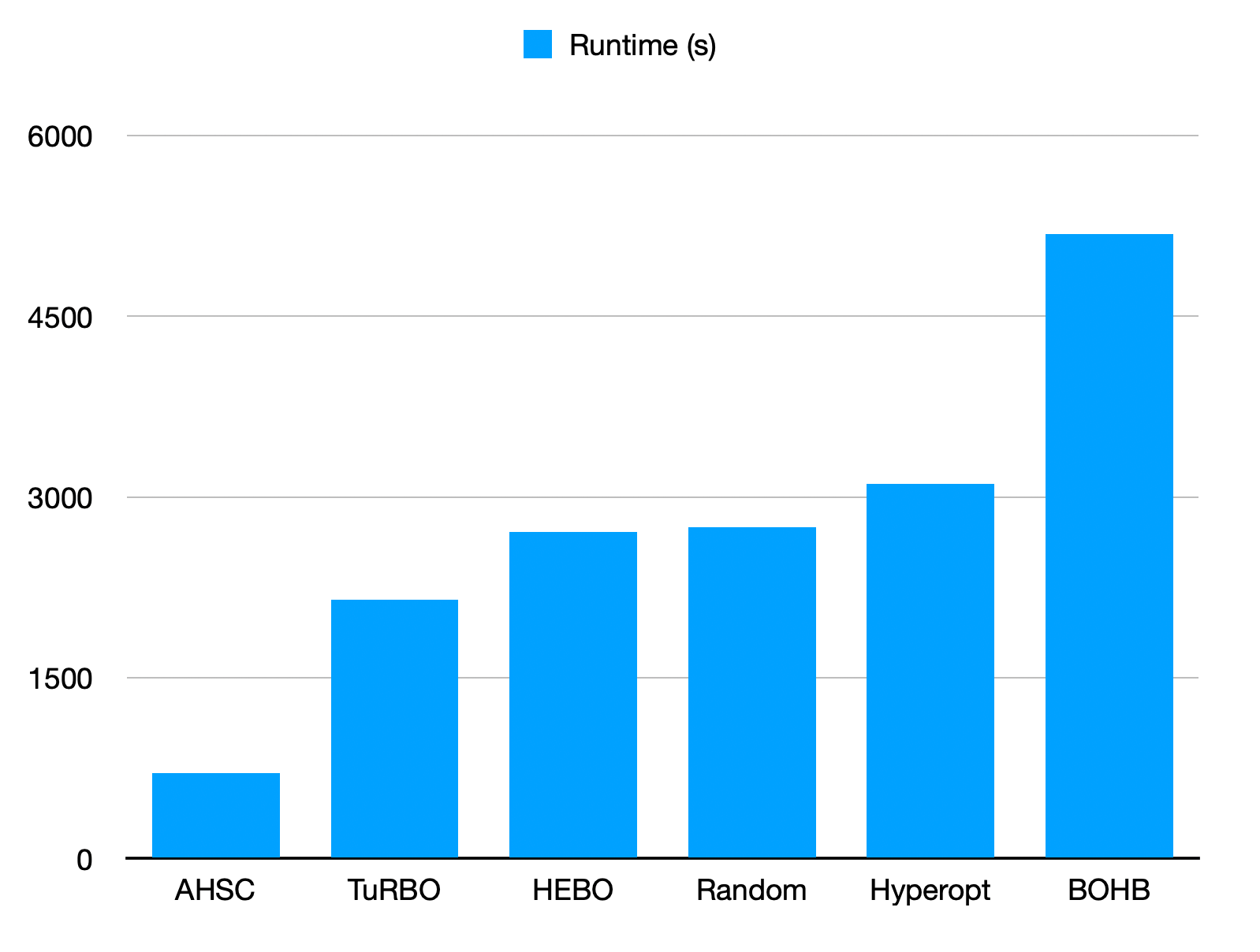}
    \caption{Algorithm runtimes on the vehicle dataset.}
    \label{fig:runtimes}
\end{figure}

On a machine with an Intel Cascade Lake CPU with 4 vCPUs and 23GB RAM and no GPU, where we ran our NeurIPS Black-Box Optimization Challenge experiments, we measured the cost of computing the strong convexity over 15 runs with varying batch sizes. The median number of batches was 15.51 (442 samples), which took a median of 0.47 seconds. Therefore, it takes 0.03s/batch/config to compute the strong convexity. For example, the breast cancer dataset has 569 samples. Using the mean batch size in the hyper-parameter space of 130, that evaluates to 4.38 batches, which we expect to take $0.03 \times 4.38 \times 50 = 6.57$s to compute the strong convexity for 50 configurations.

The above experiments suggest that computing the strong convexity this way is computationally cheap, since we train for the full epochs only for the top 10 configurations. Figure \ref{fig:runtimes} shows the runtimes for each algorithm on the vehicle dataset, on a machine with an RTX 2060 Super. Our approach requires between 13\% to 33\% of the runtime compared to other algorithms.

\section{Conclusion and Future Work}

In this paper, we developed a novel white-box hyper-parameter optimization algorithm that, after some cheap computation, requires only 10 full runs to find a good configuration. We demonstrated our results on 12 tabular and two image datasets. 

It has not escaped our attention that this method does not allow the user to specify a preference for evaluation metrics such as recall or precision. We leave this as future work. In particular, we exploit the fact that the Pareto frontier is a subset of the convex hull of the hyper-parameter performance scores. To find configurations that do well on some metric, we traverse the Pareto frontier and compute quantized\footnote{This is only necessary if the space is not dense in $\mathbb{R}^d$; for example, $Z$ is nowhere dense in $\mathbb{R}$ so that for a space $\mathbb{R}^n \times \mathbb{Z}$, quantization would be necessary.} convex combinations of adjacent points and also test them. This is similar to the approach of \citet{ammar2004multi}. However, this approach adds additional computational cost.

Our hyper-parameter optimization method has two key limitations: first, it is limited to learners for which a loss function can be defined. In some cases such as Naive Bayes, a surrogate such as the negative log-likelihood can be used, for which the strong convexity can be computed. Even in cases where the loss is not twice-differentiable, one can use a finite difference approximation to compute the Hessian (see \citet{nocedal1999numerical}):
\begin{equation*}
\frac{\partial^2 f}{\partial x_i \partial x_j}(x) \approx \frac{1}{\epsilon^2}\left( f(x + \epsilon e_i + \epsilon e_j) - f(x + \epsilon e_i) - f(x + \epsilon e_j) + f(x) \right)
\end{equation*}

where the error is $\mathcal{O}(\epsilon)$. The second, potentially more important limitation is that the strong convexity cannot be compared across learners, especially if different losses are used. For example, while algorithms such as TPE can be used on hyper-parameter spaces with multiple classes of learners, our approach cannot: the entire hyper-parameter space must have comparable strong convexity values, for which the same class of learners (such as neural networks, Naive Bayes, logistic regression, etc.) must be used. However, this can be resolved by using the same loss function across learning algorithms. For example, a negative log-likelihood ratio loss has been proposed for neural classifiers \cite{yao2020negative}, which is compatible with other learners.

\bibliographystyle{plainnat}
\bibliography{main}

\newpage
\appendix

\section{Related Work}
\label{sec:app:lit}

\textbf{Hyper-parameter optimization.} There is significant prior work in hyper-parameter optimization \cite{agrawal2019dodge, cowen2022hebo, li2017hyperband, bergstra2011algorithms, bergstra2012random, falkner2018bohb, eriksson2019scalable, ansel:pact:2014}. Indeed, as learning systems become more intricate, it is crucial that we eke out the most performance. However, this is a non-trivial problem, as evidenced by the long line of research in this direction. 

The simplest form of hyper-parameter search is random search, which tries $n$ randomly chosen hyper-parameter configurations. Opentuner \cite{ansel2014opentuner} is a multi-armed bandit meta-technique with a sliding window that incorporates an exploration/exploitation trade-off based on the number of times a specific technique is used. It combines DE, a greedy bandit mutation technique, and hill-climbing methods. 

Bayesian Optimization (BO) has emerged as the most popular technique for HPO. \citet{bergstra2011algorithms} propose the Tree of Parzen Estimators (TPE) algorithm. Rather than model $p(y|x)$, TPE models $p(x|y)$ as 
$$
p(x|y) = \begin{cases}
    l(x) &  y < y^* \\
    g(x) &  y \geq y^*
    \end{cases}
$$
where $y^*$ is chosen so that $p(y < y^*) = \gamma$ for some quantile $\gamma$. The functions $l(x)$ and $g(x)$ are kernel density estimates. TPE optimizes the EI, which they show is equivalent to maximizing $l(x) / g(x)$.
\citet{snoek2012practical} use Gaussian Process (GP) models as the surrogate function in Bayesian optimization. They use Expected Improvement (EI) as the acquisition function. Similarly, \citet{hernandez2014predictive} use predictive entropy search (PES) as the acquisition function. \citet{swersky2014freeze} exploit iterative training procedures in their Bayesian optimization framework, which they call freeze-thaw Bayesian optimization. \citet{snoek2015scalable} use neural networks for modeling distributions over functions that yields an approach that scales linearly over data size (rather than cubically as in GP-based Bayesian optimization). BOHB \cite{falkner2018bohb} combines the BO-based TPE with HyperBand \cite{li2017hyperband}, replacing the initial random configurations with a model-based search. Notably, BOHB uses a single multi-dimensional KDE instead of hierarchical single-dimensional KDEs used by TPE. The authors of HEBO \cite{cowen2022hebo} note that (i) even simple HPO problems can be non-stationary and heteroscedastic (ii) different acquisition functions can conflict. To tackle the former, they use the Box-Cox \cite{box1964analysis} and Yeo-Johnson \cite{yeo2000new} output transformations and the Kumaraswamy \cite{kumaraswamy1980generalized} input transformation. It also uses NSGA-II to optimize a multi-objective acquisition function. TuRBO \cite{dou2023turbo} assumes the hyper-parameter to performance mapping is Lipschitz, and generates pseudo-points to improve convergence of vanilla BO. It also uses an ensemble of learners to predict performance, and if the prediction is poor, uses it to instead update the GP model. Finally, we mention PriorBand \cite{mallik2023priorband}, which incorporates an expert's prior beliefs about good configurations, but maintains good performance even if that prior is bad.

We defer to \cite{feurer2019hyperparameter} and \cite{bischl2023hyperparameter} for recent reviews on hyper-parameter optimization techniques. There is also a long line of work studying neural architecture search, which aims to find optimal architectures for a dataset. We refer the reader to \citet{white2023neural} for a comprehensive review of the field.


Several benchmarks have been proposed for hyper-parameter optimization: notable ones include YAHPO Gym \cite{pfisterer2022yahpo}, HPO-B \cite{arango2021hpo}, and HPOBench \cite{eggensperger2021hpobench}.

\textbf{Flat minima and generalization.} The idea of flat minima was first studied by \citet{hochreiter1994simplifying}. In particular, they define ``flat minima'' as large connected regions where the weights are $\epsilon-$optimal. \citet{hochreiter1997flat} intuit that because sharper minima require higher precision, flatter minima require less bits to describe. They use this intuition to show that flat minima correspond to minimizing the number of bits required to describe the weights of a neural network. This notion of minimum description length (MDL) \cite{rissanen1983universal, grunwald2007minimum} was also exploited early on by \citet{hinton1993keeping}. Flat minima were revisited by \citet{chaudhari2019entropy}, who noted that minima with low generalization error have a large proportion of their eigenvalues close to zero. They then construct a modified Gibbs distribution corresponding to an energy landscape $f$, and minimize the negative local entropy of this modified distribution, and approximate the gradient via stochastic gradient Langevin dynamics (SGLD) \cite{welling2011bayesian}. However, their assumptions were, admittedly unrealistic. \citet{keskar2016large} show that when using large batch sizes, optimizers converge to sharp minima, which are characterized by many large positive eigenvalues of the Hessian. Further, they define the notion of sharpness as a generalization measure as the robustness to adversarial perturbations in the parameter space:
\begin{equation}
\zeta(\boldsymbol w; \epsilon) = \frac{\max\limits_{\abs{\boldsymbol v} \leq \epsilon(\abs{\boldsymbol w} + 1)} f(\boldsymbol w + \boldsymbol v) - f(\boldsymbol w)}{1 + f(\boldsymbol w)}
\label{eq:sharpness}
\end{equation}
and compute this using 10 iterations of L-BFGS-B \cite{byrd1995limited} with $\epsilon = \{ 10^{-3}, 5 \cdot 10^{-4} \}$. In particular, the above is closely related to the largest eigenvalue of $\nabla^2 f(\boldsymbol w)$. This notion of sharpness was also endorsed by \citet{jiang2019fantastic}, who performed a large-scale study of many complexity measures on two datasets, with 2,187 convolutional networks. In particular, they endorse the following metrics for generalization: (i) variance of gradients (ii) squared ratio of magnitude of parameters to magnitude of perturbation, à la \citet{keskar2016large} (iii) path norm \cite{neyshabur2017exploring} (iv) VC-dimension (inversely correlated).
    
A line of work in physics \cite{baldassi2015subdominant,baldassi2016unreasonable} showed that in the discrete weight scenario (a much more difficult problem), isolated minima were rare, but there existed accessible, dense regions of subdominant minima, and that these were robust to perturbations and generalized better. These authors devised algorithms explicitly designed to search for nonisolated minima. In the continuous weight space, nonisolated minima correspond to flat minima. \citet{dziugaite2017computing} obtain nonvacuous generalization bounds for deep overparameterized neural networks using the PAC-Bayes framework \cite{mcallester1999pac}. \citet{neyshabur2014search} showed that increasing the number of hidden units (which in turn, increases the number of trainable parameters) can lead to a decrease in generalization error with the same training error. \citet{neyshabur2017exploring} showed that sharpness as computed by \eqref{eq:sharpness} is not sufficient to capture the generalization behavior (but noting that ``combined with the norm, sharpness does seem to provide a capacity measure''), and advocate for expected sharpness in the PAC-Bayesian framework, similar to \citet{dziugaite2017computing}. They show that plots of expected sharpness versus KL divergence in PAC-Bayes bounds for varying dataset sizes capture generalization well. \citet{li2018visualizing} showed that the sharpness of the loss surface correlates well with the generalization error. In their seminal paper, \citet{jastrzkebski2017three} showed that SGD is a Euler-Maruyama discretization of a stochastic differential equation whose dynamics are influenced by the ratio of learning rate to batch size (which they call ``stochastic noise''), and that SGD finds wider minima with higher stochastic noise levels than sharper minima. \citet{wu2023implicit} study the flat minima hypothesis through the lens of dynamical stability, and show that SGD will escape from overly sharp (measured by the Frobenius norm of the Hessian), low-loss areas exponentially fast. We note that they use the associate empirical Fisher matrix (AEFM) as an approximation for the Hessian, which holds for low empirical risk (and converges to the Hessian, see \citet{kunstner2019limitations}).

In search of flatter loss surfaces, \citet{seong2018towards} propose the use of non-monotonic learning rate schedules. They advocate for large learning rates, which enable the optimization algorithm to escape sharp minima, and descend into flatter minima. The seminal works of \citet{dauphin2014identifying} and \citet{choromanska2015loss} showed both theoretically and empirically that local minima are more likely to be located close to the global minimum. In particular, \citet{dauphin2014identifying} showed using the perspectives of random matrix theory (via the eigenvalue distribution of Gaussian random matrices \cite{wigner1958distribution}), statistical physics (via the analysis of critical points in Gaussian fields by \citet{bray2007statistics}), and neural network theory \cite{saxe2013exact} that saddle points are exponentially less likely than local minima.

Most recently, \citet{wen2023sharpness} showed that sharpness is neither necessary, nor sufficient for generalization, by studying some simple architectures, showing that generalization depends on the data distribution as well as the architecture; for example, merely adding a bias to a 2-layer MLP makes generalization impossible for the XOR dataset.
    
Of course, it is not possible to discuss generalization in deep learning without discussing the results of \citet{zhang2021understanding}, who showed that deep learners can fit with zero training error on random labels using an architecture that generalizes well when fit to the correct labels. \citet{bartlett2019nearly} and \citet{harvey2017nearly} found that the VC-dimension for deep ReLU networks is $\mathcal{O}(WL \log W)$ and $\Theta(WU)$ respectively, where $W$ is the number of parameters, $L$ the number of layers, and $U$ the number of units.  \citet{hardt2016train} show that stochastic gradient methods are uniformly stable\footnote{An algorithm $A$ is $\epsilon-$uniformly stable if $\forall S, S^\prime \in Z$ for some space $Z$, such that the datasets $S$ and $S^\prime$ differ by at most one example, $\sup\limits_z \mathbb{E}_A [f(A(S); z) - f(A(S^\prime); z)] \leq \epsilon$}, which implies generalization \textit{in expectation}.
\section{Background}
\label{sec:app:background}


\begin{definition}[Fenchel conjugate]
    The Fenchel conjugate, $f^*: \mathcal{X} \to \mathbb{R}^*$ for a convex function $f: \mathcal{X} \to \mathbb{R}^*$ is defined as $f^*(y) = \sup\limits_{x \in \mathcal{X}} \langle x, y \rangle - f(x)$
\end{definition}

\begin{definition}[Dual norm]
    Given a norm $\lVert \cdot \rVert$ on $\mathcal{X}$, the dual norm $\lVert \cdot \rVert_*$ is defined as 
    $$
    \lVert y \rVert_* = \sup \{ \langle x, y \rangle: \lVert x \rVert \leq 1 \}
    $$
\end{definition}

Note that the Fenchel conjugate of $\frac{1}{2} \lVert x \rVert$ is $\frac{1}{2} \lVert y \rVert_*$.

\begin{definition}[Strong convexity]
    A function $f: \mathcal{X} \to \mathbb{R}^*$ is $\mu-$strongly convex with respect to $\lVert \cdot \rVert$ if $\forall x, y$ in the relative interior of $\text{dom } f$ and $\alpha \in (0, 1)$,
    $$
    f(\alpha x + (1 - \alpha) y) \leq \alpha f(x) + (1 - \alpha) f(y) - \frac{1}{2} \mu \alpha (1 - \alpha) \lVert x - y \rVert^2
    $$
\end{definition}

\begin{definition}[Smoothness]
    A function $f: \mathcal{X} \to \mathbb{R}$ is $\beta-$smooth with respect to $\lVert \cdot \rVert$ if $f \in C^1$ and if $\forall x, y \in \text{dom } f$,
    $$
    f(x + y) \leq f(x) + \langle \nabla f(x), y \rangle + \frac{1}{2} \beta \lVert y \rVert^2
    $$
\end{definition}



\section{Auxiliary Proofs}
\label{sec:proofs}

\begin{lemma}
    Let $f: \mathbb{R}^d \to \mathbb{R}$ be a differentiable function. Then, $\mu-$strong convexity implies:
    \begin{enumerate}[(i)]
        \item (Polyak-Łojasiewicz (PL) inequality)
        \[
            \frac{1}{2} \norm{ \nabla f(x) }^2 \geq \mu (f(x) - f(x^*))
        \]

        \item 
        \[
            \norm{\nabla f(x) - \nabla f(y)} \geq \mu \norm{x - y}
        \]

        \item 
        \[
            \left( \nabla f(x) - \nabla f(y) \right)^T (x - y) \leq \frac{1}{\mu} \norm{\nabla f(x) - \nabla f(y)}^2
        \]
    \end{enumerate}
    \label{lemma:stcvx}
\end{lemma}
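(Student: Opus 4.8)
The plan is to reduce all three claims to the first-order characterization of strong convexity. As a preliminary step I would show that, for a differentiable $f$, the Jensen-type definition of $\mu$-strong convexity in Appendix~\ref{sec:app:background} is equivalent to the gradient inequality
\begin{equation*}
f(y) \geq f(x) + \nabla f(x)^T (y - x) + \frac{\mu}{2} \norm{y - x}^2 \qquad \forall\, x, y \in \mathbb{R}^d .
\end{equation*}
This follows by setting $g(x) = f(x) - \frac{\mu}{2}\norm{x}^2$: the defining inequality is precisely the statement that $g$ is convex, and a differentiable convex function lies above each of its tangent planes, $g(y) \geq g(x) + \nabla g(x)^T(y-x)$; substituting back the definition of $g$ yields the display. (Equivalently, one divides the defining inequality by $1-\alpha$ and lets $\alpha \uparrow 1$.)

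Granting the gradient inequality, the three parts are short. For (i), the gradient inequality gives, for every $y$, $f(y) \geq f(x) + \nabla f(x)^T(y-x) + \frac{\mu}{2}\norm{y-x}^2 \geq f(x) - \frac{1}{2\mu}\norm{\nabla f(x)}^2$, the last step being the unconstrained minimization of the quadratic lower bound over $y$ (minimizer $y = x - \frac{1}{\mu}\nabla f(x)$); taking $y = x^\ast$ gives the PL inequality. For (ii), apply the gradient inequality twice with $x$ and $y$ interchanged and add the two; the function values cancel and one obtains the strong monotonicity bound $(\nabla f(x) - \nabla f(y))^T(x - y) \geq \mu\norm{x-y}^2$. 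Cauchy--Schwarz bounds the left side by $\norm{\nabla f(x) - \nabla f(y)}\,\norm{x-y}$, and dividing by $\norm{x-y}$ (the case $x=y$ being trivial) yields (ii). For (iii), combine Cauchy--Schwarz, $(\nabla f(x) - \nabla f(y))^T(x-y) \leq \norm{\nabla f(x) - \nabla f(y)}\,\norm{x-y}$, with the bound $\norm{x-y} \leq \frac{1}{\mu}\norm{\nabla f(x) - \nabla f(y)}$ just established in (ii).

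The only genuine obstacle is the preliminary reduction to the gradient inequality; once that is in hand, (i)--(iii) are elementary manipulations. A minor point in (i) is the existence of a minimizer $x^\ast$: since $\mu$-strong convexity forces the sublevel sets of $f$ to be bounded, $f$ is coercive on $\mathbb{R}^d$ and hence attains its infimum, so $x^\ast$ is well-defined.
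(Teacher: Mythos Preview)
Your arguments for (i) and (ii) coincide with the paper's: both start from the first-order gradient inequality, minimize over $y$ for (i), and add the two symmetric inequalities plus Cauchy--Schwarz for (ii). Your preliminary derivation of the gradient inequality from the Jensen-type definition and the remark on the existence of $x^\ast$ are details the paper leaves implicit.

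For (iii) you take a genuinely different and more elementary route. You simply chain Cauchy--Schwarz with the bound $\norm{x-y}\leq\tfrac{1}{\mu}\norm{\nabla f(x)-\nabla f(y)}$ from (ii). The paper instead introduces the auxiliary function $\phi_x(z)=f(z)-\nabla f(x)^T z$, observes it is $\mu$-strongly convex with minimizer $z^\ast=x$, applies the PL inequality (i) to $\phi_x$ at $y$, then symmetrizes and adds. The advantage of the paper's detour is the intermediate inequality
\[
f(y)-f(x)-\nabla f(x)^T(y-x)\;\leq\;\frac{1}{2\mu}\norm{\nabla f(x)-\nabla f(y)}^2,
\]
a Bregman-divergence upper bound that is occasionally useful on its own; your approach reaches (iii) in two lines but does not produce this byproduct. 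Both proofs are correct.
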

\begin{proof}
    \begin{enumerate}[(i)]
        \item Strong convexity implies
        $$
        f(y) \geq f(x) + \nabla f(x)^T (y - x) + \frac{\mu}{2} \norm{y - x}^2
        $$
        Minimizing with respect to $y$ yields the result.

        \item Strong convexity gives us:
        \begin{equation}
        \left( \nabla f(x) - \nabla f(y) \right)^T (x - y) \geq \mu \norm{x - y}^2 \label{eq:proof:l1}
        \end{equation}

        Applying Cauchy-Schwarz inequality gives us:
        \begin{equation*}
            \scalebox{0.9}{$
                \begin{aligned}
                    \norm{\nabla f(x) - \nabla f(y)}\norm{x - y} &\geq \left( \nabla f(x) - \nabla f(y) \right)^T (x - y) \\
                    &\geq \mu \norm{x - y}^2
                \end{aligned}
            $}
        \end{equation*}
        where the last step comes from \eqref{eq:proof:l1}.

        \item Set $\phi_x(z) = f(z) - \nabla f(x)^T z$. It is easy to see that $\phi_x$ is also $\mu-$strongly convex. Applying the Polyak-Łojasiewicz inequality to $\phi_x(z)$ with $z^* = x$,
        \begin{equation}
            \scalebox{0.8}{$
                \begin{aligned}
                    \left( f(y) - \nabla f(x)^T y \right) - \left( f(x) - \nabla f(x)^T x \right) &= \phi_x(y) - \phi_x(z^*) \nonumber \\
                    &\leq \frac{1}{2\mu} \norm{\nabla \phi_x(y)}^2 \nonumber \\
                    &\leq \frac{1}{2\mu} \norm{\nabla f(y) - \nabla f(x)}^2 \label{eq:proof:l2}
                \end{aligned}
            $}
        \end{equation}
        Swapping $x$ and $y$ in the above,
        \begin{equation}
            \scalebox{0.8}{$
                \begin{aligned}
                    \left( f(x) - \nabla f(y)^T x \right) - \left( f(y) - \nabla f(y)^T y \right) \leq \frac{1}{2\mu} \norm{\nabla f(x) - \nabla f(y)}^2 \label{eq:proof:l3}
                \end{aligned}
            $}
        \end{equation}
        Adding \eqref{eq:proof:l2} and \eqref{eq:proof:l3} yields the result.
    \end{enumerate}
\end{proof}

\begin{lemma}
    If $f: \mathbb{R}^n \to \mathbb{R}$ is smooth and $\mu-$strongly convex, then
    \[
        \frac{1}{2\mu} \norm{\nabla f(x)}^2 \geq f(x) - f(x^*) \geq \frac{\mu}{2} \norm{x - x^*}^2
    \]
    \label{lemma:pl-conseq}
\end{lemma}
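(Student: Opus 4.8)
The plan is to prove the two inequalities separately; both follow directly from $\mu$-strong convexity, and in fact smoothness is not needed for either bound (it is carried in the hypotheses only to match the setting in which this lemma is later invoked, e.g.\ Theorem~\ref{lemma:smoothgd}).

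For the right-hand inequality $f(x) - f(x^*) \geq \frac{\mu}{2}\norm{x - x^*}^2$, I would start from the first-order characterization of $\mu$-strong convexity for differentiable $f$, namely
\[
    f(x) \geq f(y) + \nabla f(y)^T (x - y) + \frac{\mu}{2}\norm{x - y}^2 \quad \forall x, y,
\]
which is the same inequality already used in the proof of Lemma~\ref{lemma:stcvx}(i). Specializing to $y = x^*$ and using that strong convexity on $\mathbb{R}^n$ guarantees a unique minimizer $x^*$ with $\nabla f(x^*) = 0$ (first-order optimality), the linear term vanishes and the claimed lower bound drops out immediately.

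For the left-hand inequality $\frac{1}{2\mu}\norm{\nabla f(x)}^2 \geq f(x) - f(x^*)$, I would simply invoke the Polyak-\L{}ojasiewicz inequality, which is exactly part (i) of Lemma~\ref{lemma:stcvx}: $\frac{1}{2}\norm{\nabla f(x)}^2 \geq \mu\,(f(x) - f(x^*))$. Dividing both sides by $\mu > 0$ rearranges this into the desired form, and chaining the two bounds yields the stated sandwich.

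There is no real obstacle here; the statement is essentially a repackaging of the PL inequality together with the quadratic lower bound inherent in strong convexity. The only step deserving a word of care is the existence and first-order characterization of $x^*$ — this is where strong convexity (and, as noted in the assumptions, completeness of the ambient space, which supplies the limits needed to locate the minimizer) is actually used, rather than in the two inequalities themselves.
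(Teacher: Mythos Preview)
Your proposal is correct and matches the paper's own proof essentially line for line: the paper also invokes the Polyak--\L{}ojasiewicz inequality (Lemma~\ref{lemma:stcvx}(i)) for the left-hand bound and the first-order strong-convexity inequality specialized at $x^*$ for the right-hand bound. Your extra remarks that smoothness is not actually used and that the existence of $x^*$ is the one subtle point are valid refinements, but they do not change the argument.
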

\begin{proof}
    The first part is the Polyak-Łojasiewicz inequality. The second part follows from the definition of strong convexity and setting $y = x, x = x^*$ and using $f(x^*) \geq \min\limits_y f(y)$.
\end{proof}

\begin{theorem}[Smooth and strongly convex gradient descent]
    Suppose $f: \mathbb{R}^n \to \mathbb{R}$ be $\beta-$smooth and $\mu-$strongly convex. Then with the gradient descent update rule
    \[
        x_{k+1} = x_k - \frac{1}{\beta} \nabla f(x_k)
    \]
    where $\frac{1}{\beta}$ is the learning rate, we have
    \[
        f(x_k) - f(x^*) \leq \left( 1 - \frac{\mu}{\beta} \right)^k (f(x_0) - f(x^*))
    \]
    Consequently, we require $\frac{\beta}{\mu}\log \frac{f(x_0) - f(x^*)}{\epsilon}$ iterations to find an $\epsilon-$optimal point.
    \label{lemma:smoothgd}
\end{theorem}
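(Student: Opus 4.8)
The plan is to chain together the two structural facts already available: $\beta$-smoothness controls how much one gradient step can overshoot, and $\mu$-strong convexity (through the PL inequality of Lemma \ref{lemma:stcvx}(i), restated in Lemma \ref{lemma:pl-conseq}) lower-bounds the gradient norm in terms of the suboptimality gap. First I would apply the smoothness definition with $x = x_k$ and $y = x_{k+1} - x_k = -\frac{1}{\beta}\nabla f(x_k)$ to obtain the descent inequality
\[
    f(x_{k+1}) \leq f(x_k) - \frac{1}{\beta}\norm{\nabla f(x_k)}^2 + \frac{1}{2\beta}\norm{\nabla f(x_k)}^2 = f(x_k) - \frac{1}{2\beta}\norm{\nabla f(x_k)}^2 .
\]

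Next I would invoke the PL inequality $\frac{1}{2}\norm{\nabla f(x_k)}^2 \geq \mu\left(f(x_k) - f(x^*)\right)$, i.e. $\norm{\nabla f(x_k)}^2 \geq 2\mu\left(f(x_k) - f(x^*)\right)$, and substitute it into the descent inequality to get
\[
    f(x_{k+1}) - f(x^*) \leq \left(1 - \frac{\mu}{\beta}\right)\left(f(x_k) - f(x^*)\right).
\]
Since a function that is simultaneously $\mu$-strongly convex and $\beta$-smooth must satisfy $\mu \leq \beta$ (comparing the quadratic lower and upper bounds around any point), the contraction factor lies in $[0,1)$, so unrolling the recursion over $k$ steps yields $f(x_k) - f(x^*) \leq (1 - \mu/\beta)^k\left(f(x_0) - f(x^*)\right)$, which is the claimed bound.

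For the iteration-complexity statement I would require the right-hand side to be at most $\epsilon$, take logarithms, and use the elementary bound $\log(1 - \mu/\beta) \leq -\mu/\beta$ to conclude that $k \geq \frac{\beta}{\mu}\log\frac{f(x_0) - f(x^*)}{\epsilon}$ suffices, giving an $\epsilon$-optimal point. I do not expect any real obstacle here: the only points demanding care are the correct sign bookkeeping when plugging the step into the smoothness inequality, and justifying $\mu \leq \beta$ so that the geometric factor is genuinely contractive; everything else is routine substitution and induction.
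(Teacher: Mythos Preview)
Your proposal is correct and mirrors the paper's proof almost line for line: derive the descent inequality $f(x_{k+1}) \leq f(x_k) - \tfrac{1}{2\beta}\norm{\nabla f(x_k)}^2$ from $\beta$-smoothness, plug in the PL inequality from Lemma~\ref{lemma:pl-conseq}, unroll the resulting contraction, and bound $(1-\mu/\beta)^k$ by $\exp(-\mu k/\beta)$ to read off the iteration count. If anything, you are slightly more explicit than the paper, which simply asserts the descent inequality ``from the above results'' and does not pause to justify $\mu \leq \beta$.
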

\begin{proof}
    From the above results,
    \[
        f(x_{k+1}) \leq f(x_k) - \frac{1}{2\beta} \norm{\nabla f(x_k)}^2
    \]
    and Lemma \ref{lemma:pl-conseq} gives us
    \[
        \norm{\nabla f(x_k)}^2 \geq 2\mu (f(x_k) - f(x^*))
    \]
    Therefore,
    \begin{align*}
        f(x_{k+1}) - f(x^*) &\leq f(x_k) - f(x^*) - \frac{1}{2\beta} \norm{\nabla f(x_k)}^2 \\
        &\leq f(x_k) - f(x^*) - \frac{\mu}{\beta} (f(x_k) - f(x^*)) \\
        &= \left( 1 - \frac{\mu}{\beta} \right) (f(x_k) - f(x^*)) \\
        f(x_k) - f(x^*) &\leq \left( 1 - \frac{\mu}{\beta} \right)^k (f(x_k) - f(x^*)) \\
        &\leq \exp\left(-\frac{\mu k}{\beta}\right) (f(x_0) - f(x^*)) 
    \end{align*}
    Therefore we need $k = \frac{\beta}{\mu}\log \frac{f(x_0) - f(x^*)}{\epsilon}$ iterations for $\epsilon-$optimality.
\end{proof}

Note that strong convexity guarantees optimality. $\beta-$smoothness can only assure $\epsilon-$criticality. This implies the existence of global minima. 

Strong convexity is a necessary condition for learnability, as along with $\beta-$smoothness, it can be shown that such problems are learnable \cite{shalev2014understanding}. Additionally, strong convexity provides a quadratic lower bound on the growth of the loss function, which implies that the convexity condition will never be violated in the local domain of the function in the context of deep regression with regularization.

\begin{theorem}
    Suppose $f$ is a closed and convex function. Then $f$ is $\beta-$strongly convex with respect to a norm $\lVert \cdot \rVert$ iff $f^*$ is $\frac{1}{\beta}-$smooth with respect to the dual norm $\lVert \cdot \rVert_*$. That is, if $f^*$ is smooth, then $f$ is strongly convex.
\end{theorem}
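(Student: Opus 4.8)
The plan is to establish both implications and to use the Fenchel--Moreau identity $f^{**}=f$ (valid since $f$ is closed and convex) to reduce the reverse direction to a ``dual'' version of the forward one. The workhorse throughout is the subgradient correspondence $y\in\partial f(x)\iff x\in\partial f^*(y)$, together with the fact that $f^*$ is differentiable at any $y$ at which the supremum $\sup_x \langle x,y\rangle - f(x)$ is attained at a unique maximizer, with $\nabla f^*(y)$ equal to that maximizer. Since a $\beta$-strongly convex $f$ grows at least quadratically, $f^*$ is finite everywhere, so its domain is the whole space and such $y$ are interior.

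For the forward direction, assume $f$ is $\beta$-strongly convex with respect to $\norm{\cdot}$. First I would observe that for every $y$ the map $x\mapsto \langle x,y\rangle - f(x)$ is $\beta$-strongly concave, hence its maximizer $x(y)$ exists and is unique; by the differentiability-of-conjugate fact this gives $f^*\in C^1$ with $\nabla f^*(y)=x(y)$ and $y\in\partial f(\nabla f^*(y))$. Setting $x_i=\nabla f^*(y_i)$ and adding the two strong-convexity inequalities for $f$ at $x_1,x_2$ (with subgradients $y_1,y_2$) produces the strong-monotonicity estimate $\langle y_1-y_2,\, x_1-x_2\rangle \ge \beta\norm{x_1-x_2}^2$. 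Combining this with the Fenchel--Young/Hölder bound $\langle y_1-y_2,\, x_1-x_2\rangle \le \norm{y_1-y_2}_*\norm{x_1-x_2}$ yields $\norm{\nabla f^*(y_1)-\nabla f^*(y_2)}\le \tfrac1\beta\norm{y_1-y_2}_*$; integrating this Lipschitz-gradient bound along segments recovers the descent-lemma quadratic upper bound, i.e.\ $f^*$ is $\tfrac1\beta$-smooth with respect to $\norm{\cdot}_*$.

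For the reverse direction, assume $f^*$ is $\tfrac1\beta$-smooth w.r.t.\ $\norm{\cdot}_*$; I would prove and apply the lemma that a convex, $L$-smooth function $g$ has a $\tfrac1L$-strongly convex conjugate. Convexity plus $L$-smoothness of $g$ give the co-coercivity inequality $\langle \nabla g(v_1)-\nabla g(v_2),\, v_1-v_2\rangle \ge \tfrac1L\norm{\nabla g(v_1)-\nabla g(v_2)}^2$; for points $u_i$ with $v_i\in\partial g^*(u_i)$ one has $u_i=\nabla g(v_i)$, so this reads $\langle u_1-u_2,\, v_1-v_2\rangle \ge \tfrac1L\norm{u_1-u_2}^2$, which is precisely the strong-monotonicity characterization of $\tfrac1L$-strong convexity of $g^*$; integrating along the segment from $u_1$ to $u_2$ recovers the defining inequality. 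Applying this with $g=f^*$ and $L=1/\beta$ shows $f^{**}$ is $\beta$-strongly convex w.r.t.\ $\norm{\cdot}$, and $f^{**}=f$ finishes the argument (and in particular the restated implication ``$f^*$ smooth $\Rightarrow$ $f$ strongly convex'').

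I expect the main obstacles to be the regularity steps rather than the algebra: (i) rigorously deducing $f^*\in C^1$ from strong convexity of $f$, which needs attainment and uniqueness of the argmax together with the theorem identifying points of differentiability of a conjugate with points of unique maximization; and (ii) the co-coercivity (Baillon--Haddad-type) inequality and the equivalence ``strongly monotone subdifferential $\iff$ strongly convex function'' carried out with a genuine norm and its dual rather than in a Hilbert space — both are standard (cf.\ \citet{shalev2014understanding}) but demand careful bookkeeping of which norm lives on which space. Routine care with properness and $+\infty$ values is also needed so that Fenchel--Moreau applies and subgradients exist on the relative interior.
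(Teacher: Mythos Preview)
Your outline is sound and would give a correct, self-contained proof. The forward direction via unique maximizers, strong monotonicity of $\partial f$, and the H\"older/integration step is the standard argument; the reverse direction via co-coercivity of $\nabla f^*$ (which does extend to arbitrary norms through the conjugate identity $\bigl(\tfrac{L}{2}\lVert\cdot\rVert^2\bigr)^* = \tfrac{1}{2L}\lVert\cdot\rVert_*^2$) followed by integration of the strongly monotone subdifferential also works, with the caveats you already flag about domains and where subgradients exist.

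The paper, however, takes no route at all: its entire proof is a deferral to two external references, namely \citet{shalev2007primal}, Lemma~15, for the implication ``strongly convex $\Rightarrow$ smooth conjugate'' and \citet{kakade2009duality}, Theorem~6, for the converse. So your proposal is not merely a different approach but a genuine proof where the paper offers none. What your write-up buys is self-containment and an explicit accounting of which norm lives on which side of the duality, at the cost of having to pin down the regularity lemmas you list; what the paper's approach buys is brevity and a pointer to sources where those lemmas are already proved in full. If you want to match the cited proofs more closely, note that \citet{kakade2009duality} argue the smooth-to-strongly-convex direction somewhat differently (working directly with the definition of the conjugate and the smoothness upper bound rather than passing through co-coercivity and subdifferential monotonicity), but your route is equally valid.
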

\begin{proof}
    We defer the proofs to \citet{shalev2007primal}, Lemma 15 ($1 \Rightarrow 2$) and \citet{kakade2009duality}, Theorem 6 ($2 \Rightarrow 1$).
\end{proof}
The above result has implications for the generalization bounds of various algorithms, such as lasso regression (which is a special case of the group lasso algorithm discussed in \cite{kakade2009duality}), kernel learning, and online control. For a detailed exposition, we refer the reader to \citet{kakade2009duality}.

\end{document}